\newcommand{\w}{\boldsymbol{w}}
\newcommand{\X}{\boldsymbol{X}}
\newcommand{\I}{\boldsymbol{I}}
\newcommand{\Q}{\boldsymbol{Q}}
\newcommand{\W}{\boldsymbol{W}}
\newcommand{\A}{\boldsymbol{A}}
\newcommand{\B}{\boldsymbol{B}}
\newcommand{\Rb}{\boldsymbol{R}}
\newcommand{\U}{\boldsymbol{U}}
\newcommand{\V}{\boldsymbol{V}}
\newcommand{\Y}{\boldsymbol{Y}}
\newcommand{\Z}{\boldsymbol{Z}}
\newcommand{\q}{\boldsymbol{q}}
\newcommand{\Hb}{\boldsymbol{H}}
\newcommand{\Sig}{\boldsymbol{\Sigma}}
\newcommand{\R}{\mathbb{R}}
\theoremstyle{plain}
\newtheorem{theorem}{Theorem}[section]
\theoremstyle{definition}
\theoremstyle{remark}
\title{CLoQ: Enhancing Fine-Tuning of Quantized LLMs via Calibrated LoRA Initialization}
\author{\name Yanxia Deng\thanks{Equal contribution.} \email ydeng5@albany.edu \\
      \addr Department of Mathematics and Statistics\\
      University at Albany, SUNY
      \AND
      \name Aozhong Zhang\footnotemark[1] \email azhang3@albany.edu \\
      \addr Department of Mathematics and Statistics\\
      University at Albany, SUNY
      \AND
      \name Selcuk Gurses \email sgurses@albany.edu\\
      \addr Department of Mathematics and Statistics\\
      University at Albany, SUNY
      \AND
      \name Naigang Wang \email nwang@us.ibm.com\\
      \addr IBM T. J. Watson Research Center
      \AND
      \name Zi Yang \email zyang8@albany.edu\\
      \addr Department of Mathematics and Statistics\\
      University at Albany, SUNY
      \AND
      \name Penghang Yin\thanks{Corresponding to: \texttt{pyin@albany.edu}.} \email pyin@albany.edu\\
      \addr Department of Mathematics and Statistics\\
      University at Albany, SUNY
      }
\begin{document}

\maketitle

\begin{abstract}
Fine-tuning large language models (LLMs) using low-rank adaptation (LoRA) has become a highly efficient approach for downstream tasks, particularly in scenarios with limited computational resources. However, applying LoRA techniques to quantized LLMs poses unique challenges due to the reduced representational precision of quantized weights. In this paper, we introduce CLoQ (\underline{\textbf{C}}alibrated \underline{\textbf{Lo}}RA initialization for \underline{\textbf{Q}}uantized LLMs), a simplistic initialization strategy designed to overcome these challenges. Our approach focuses on minimizing the layer-wise discrepancy between the original LLM and its quantized counterpart with LoRA components during initialization. By leveraging a small calibration dataset, CLoQ quantizes a pre-trained LLM and determines the optimal LoRA components for each layer, ensuring a strong foundation for subsequent fine-tuning.
A key contribution of this work is a novel theoretical result that enables the accurate and closed-form construction of these optimal LoRA components. We validate the efficacy of CLoQ across multiple tasks such as language generation, arithmetic reasoning, and commonsense reasoning, demonstrating that it consistently outperforms existing LoRA fine-tuning methods for quantized LLMs, especially at 2-bit. The code is available at \url{https://github.com/AozhongZhang/CLoQ}
\end{abstract}

\section{Introduction}

Large language models (LLMs) \cite{achiam2023gpt, touvron2023llama2,jiang2023mistral,guo2024deepseek} have achieved remarkable success across a wide range of domains and applications. 
With ongoing advancements, the size and complexity of LLMs have grown exponentially, with some models now exceeding billions or even trillions of parameters.
Although this scaling has unlocked unprecedented capabilities, it also introduces significant challenges, particularly in efficiently adapting these models to downstream tasks. Traditionally, full fine-tuning has been the dominant approach for adapting pre-trained models, involving updates to all model parameters. While effective in achieving state-of-the-art results, full fine-tuning is resource-intensive, requiring substantial GPU memory to store both model weights and optimizer states. These memory demands grow with the size of the model, making full fine-tuning increasingly impractical for large-scale models in resource-constrained settings.

To address these challenges, parameter-efficient fine-tuning (PEFT) \cite{houlsby2019parameter}, such as Low-Rank Adaptation (LoRA) \cite{hu2021lora} 
, has emerged as a promising approach. PEFT updates only a small subset of parameters while keeping the majority of the model unchanged, enabling resource-efficient fine-tuning of large-scale models. LoRA, for instance, introduces small, learnable low-rank matrices into the model's architecture. These matrices are optimized during fine-tuning while the original model weights remain frozen, significantly reducing memory and computational requirements. This design leverages the insight that weight updates often reside in a low dimensional subspace, allowing LoRA to achieve efficient adaptation with minimal overhead.


In an orthogonal direction, model compression techniques 
, notably quantization \cite{rastegari2016xnor,hubara2018quantized,choi2018pact,wang2018training,yin2016quantization,yin2018binaryrelax,yin2019blended,yin2019understanding,li2023feature,shao2023omniquant,zhang2023post,frantar2022gptq,zhang2024magr}, have been developed to minimize GPU memory usage by converting high-precision weights into low-precision representations. Initially designed for inference in memory-limited environments, quantization methods have since been adapted to support fine-tuning. A notable advancement in this regard is QLoRA \cite{dettmers2023qlora} 
, which combines LoRA 
with quantization techniques to reduce GPU memory requirements for fine-tuning significantly. By leveraging low-rank adaptation and quantized weights, QLoRA enables resource-efficient fine-tuning of LLMs without compromising performance, making it a powerful tool for adapting large-scale models to downstream tasks. However, extending LoRA to quantized LLMs introduces additional challenges, as the reduced representational precision of quantized weights can disrupt standard initialization strategies, impacting task performance.
Recent works \cite{li2023loftq,yao2023zeroquant, liao2024apiq,guo2024lqlora} proposed minimizing quantization error through strategic initialization of the low-rank components in LoRA, to align with the original weight states of the model. This strategy has demonstrated success in fine-tuning lower-bit quantized LLMs.


{\bf Contributions.} 
In this paper, we introduce CLoQ (\underline{\textbf{C}}alibrated \underline{\textbf{Lo}}RA for \underline{\textbf{Q}}uantized LLMs), an efficient layer-wise, data-driven initialization strategy specifically designed for quantized LLMs by leveraging a small calibration dataset. CLoQ consists of two main steps: a post-training quantization phase to obtain quantized weights and a generalized low-rank approximation phase under a linear transformation to compute the corresponding optimal adapters. We derive a novel closed-form solution to the low-rank approximation problem, which can be efficiently computed using just two singular value decompositions (SVDs).

Our CLoQ method requires no back-propagation, making it a highly efficient LoRA initialization scheme for fine-tuning quantized models. We demonstrate the effectiveness of CLoQ through extensive validation on multiple benchmark datasets. Our results show that CLoQ consistently outperforms existing LoRA methods for quantized LLMs, particularly at ultra-low bit-widths. For instance, the fine-tuning accuracy of \texttt{INT2} CLoQ on the Llama2-13B model \cite{touvron2023llama2} surpasses that of \texttt{INT4} QLoRA \cite{dettmers2023qlora} in the arithmetic reasoning tasks, as shown in Table \ref{tab: arithmetic reasoning}.

\begin{figure*}[h!]
    \centering
    \includegraphics[width=0.85\textwidth]{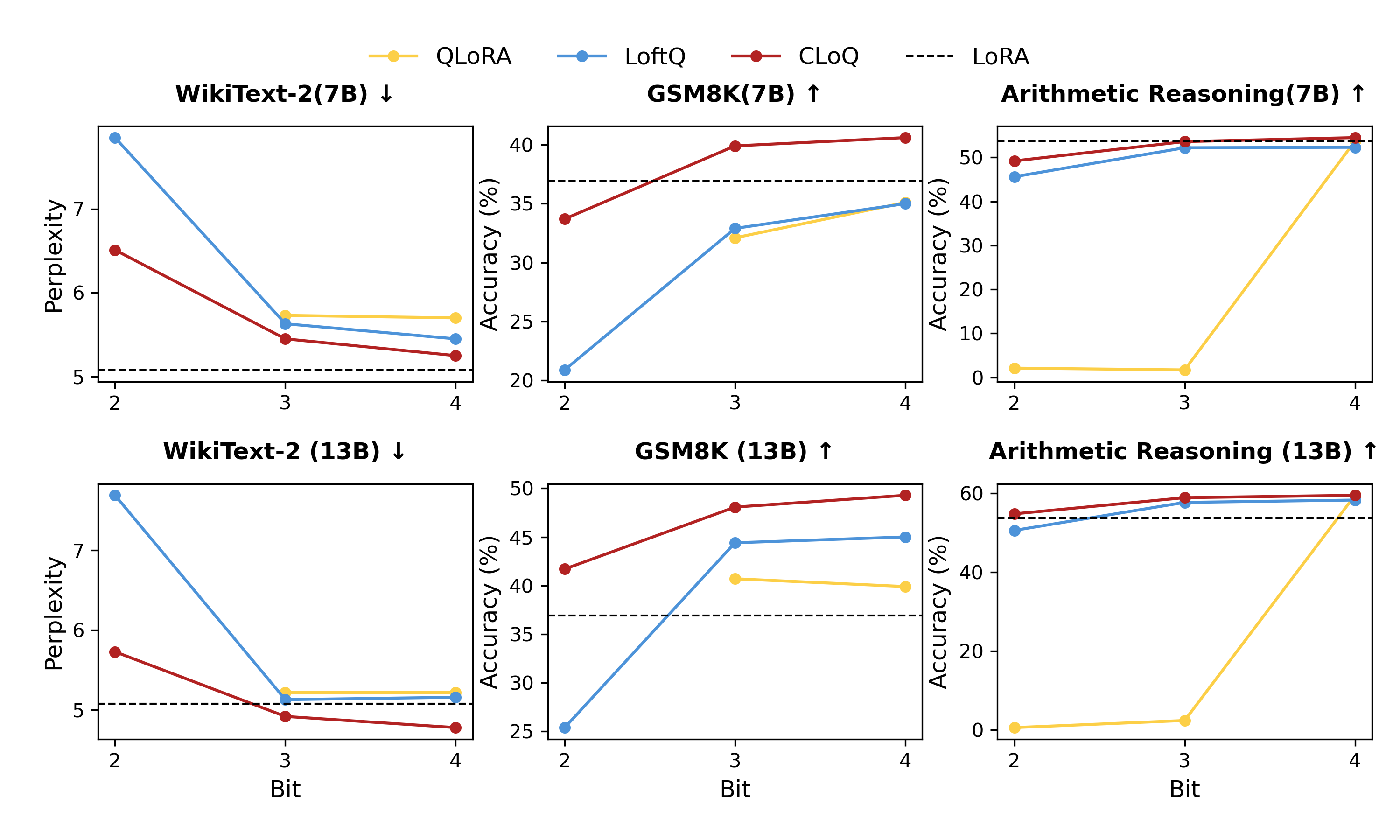}
    \caption{Fine-tuning results of Llama2-7B and Llama2-13B across various tasks. Left: the perplexity measured on WikiText-2. Middle: the accuracy achieved on GSM8K. Right: the average accuracy across multiple arithmetic reasoning tasks.}
\end{figure*}

\begin{figure*}[h!]
    \centering
    \includegraphics[width=0.7\textwidth]{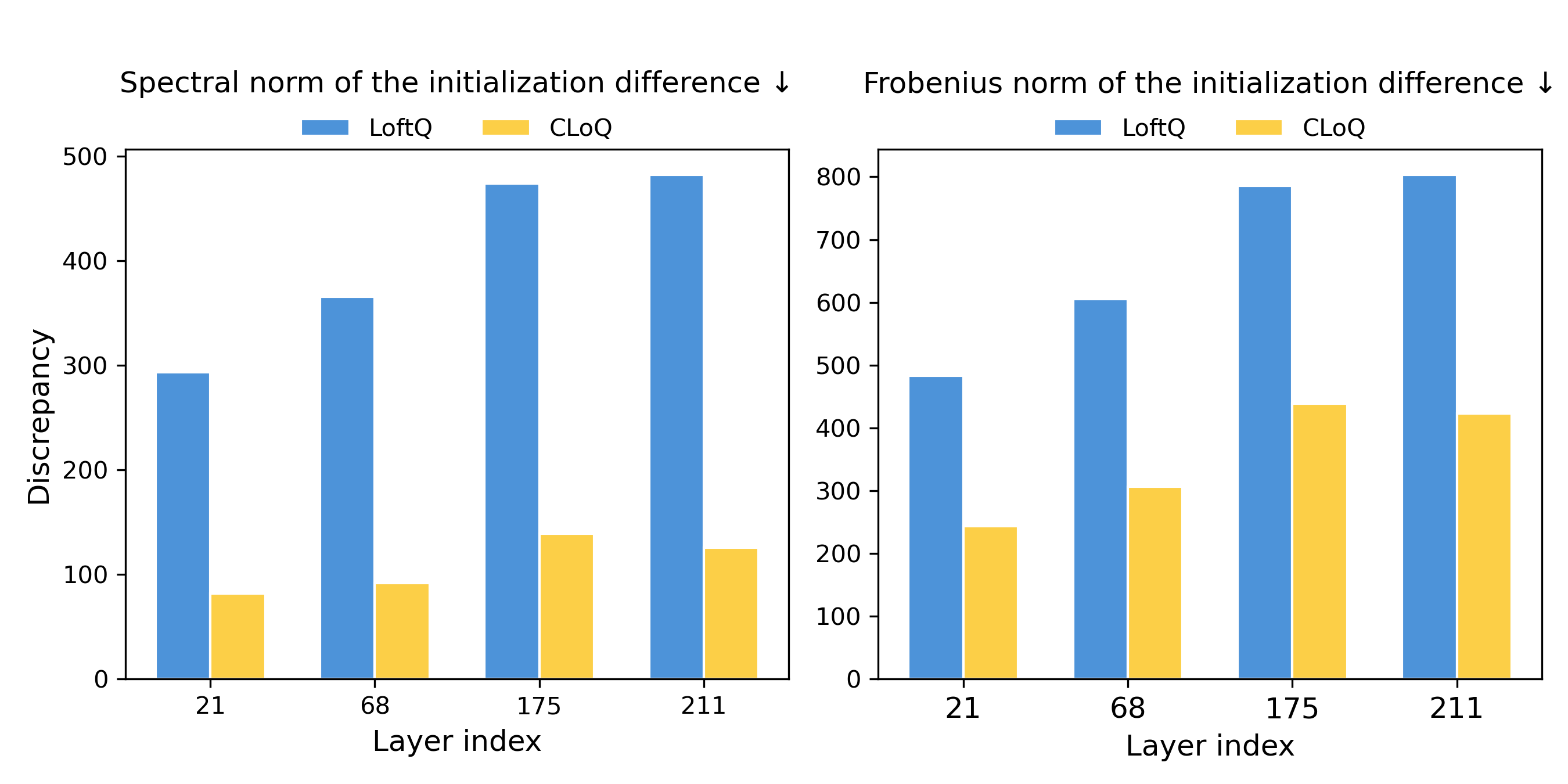}
    \caption{The discrepancy $\|\X(\Q+\A\B^T-\W)\|$ between the LoRA initialization and the original pre-trained weight matrix, computed using the spectral norm and the Frobenius norm, respectively.  The layer shown in the figures above is randomly selected from the Llama2-7B model. The initialization is derived using both CLoQ and LoftQ under 
\texttt{INT2} quantization. Notably, CLoQ significantly reduces this discrepancy, demonstrating its effectiveness.}
\end{figure*}

{\bf Notations.} We clarify the mathematical notations that will be used throughout this paper: we denote vectors by bold small letters and matrices by bold capital ones. 
For any matrix $\X\in\mathbb{R}^{m\times n}$, $\X^\top\in\R^{n\times m}$ is the transpose of $\X$, and $\mathrm{Tr}(\X):= \sum_{i=1}^m X_{i,i}$ denotes the trace of $\X$ when $m=n$. 
For any two matrices $\X, \Y \in \mathbb{R}^{m\times n}$, $\langle \X, \Y\rangle := \mathrm{Tr}(\X^\top \Y) = \sum_{i=1}^m \sum_{j=1}^n X_{i,j}Y_{i,j}$ is the inner product. We denote the Frobenius norm of $\X$ by $\|\X\|_{\mathrm{F}} := \sqrt{\mathrm{Tr}(\X^\top \X)} = \sqrt{\sum_{i=1}^m\sum_{j=1}^n X_{i,j}^2}$. In addition, for any diagonal matrix $\Sig = \mathrm{diag}(\sigma_1, \dots, \sigma_n)$,  we denote its square root by $\Sig^{\frac{1}{2}}:= \mathrm{diag}(\sigma_1^{\frac{1}{2}}, \dots, \sigma_n^{\frac{1}{2}})$.

\section{Background}

{\bf Integer Quantizer.} 
Given a set of $m$ weights $\w\in\R^m$, the widely-used  $b$-bit uniform integer (INT) quantizer \cite{choi2018pact} determines the float scaling factor $\delta = \frac{\max(\w) -\min(\w)}{2^b-1}$ and zero-point $z = -\left\lfloor\frac{\min(\w)}{\delta} \right\rceil$, where $\left\lfloor \cdot \right\rceil$ is nearest-to-round operation. The quantizer projects $\w$ onto the equally spaced grids $\mathbb{Q}=\{z \cdot \delta, (z+1)\cdot\delta,\ldots,\left(z+(2^{b}-1)\right)\cdot \delta \}^m$ to obtain the quantized weights
\[
    \q = \delta \cdot \left (\text{clip} \left(\left\lfloor\frac{{\w}}{\delta} \right\rceil + z, 0, 2^{b}-1 \right) - z\right).
\]
For channel-wise (or group-wise) quantization, the scaling factor $\delta$ is shared by the quantized weights within the same channel (or group, respecitively).

{\bf Post-Training Quantization.} Post-training quantization (PTQ) has been a cornerstone technique for compressing LLMs. PTQ methods directly identify low-precision representations of a model without requiring retraining, making them particularly well-suited for extremely large-scale AI models including LLMs. The simplest approach in this category is data-free quantization, commonly referred to as RTN, which involves rounding the pre-trained model weights to their nearest quantized states. More advanced PTQ algorithms, such as OPTQ \cite{frantar2022optq}, leverage a small calibration dataset to solve a least-squares problem under discrete constraints:
\begin{equation} \label{eq:quant_original}
    \min_{\Q \in \mathbb{Q}} \; \| \X \Q - \X \W\|^2_\mathrm{F},
\end{equation}
to calibrate the quantization layer by layer, where $\W$ is the weight matrix of the pre-trained model. In this formulation, $\X$ denotes the activation or feature matrix corresponding to a batch of calibration data for a fixed layer. This approach ensures that the quantization process preserves the layer’s output by minimizing the discrepancy between the original and quantized representations on the calibration dataset. Several efficient back-propagation-free algorithms \cite{zhang2023post,behdin2023quantease,zhang2024comq} have been proposed to address (\ref{eq:quant_original}).

{\bf Low-Rank Adaptation.} LoRA \cite{hu2021lora} enables efficient fine-tuning of large pre-trained models by introducing two small, trainable matrices,  \( \A \) and \( \B \), which are added to the frozen weight matrix \( \W \). The weights of the fine-tuned model are then expressed as $\W + \A\B^\top$,
where \( \A \in \mathbb{R}^{m \times r} \), \( \B \in \mathbb{R}^{n \times r} \), and  \(r \ll \min(m, n)\).  During fine-tuning, only \( \A \) and \( \B \) are updated, while \( \W \) remains fixed, significantly reducing the number of trainable parameters and computational overhead. 
LoRA initializes its parameters as follows:
\[
\A \sim \mathcal{N}(0, \sigma^2), \quad \B = \boldsymbol{0},
\]
ensuring that the initialization maintains perfect alignment with the pre-trained weights $\W$. 
Recent research has focused on developing variants of LoRA aimed at enhancing its performance \cite{wang2024roselora,wang2024lora-ga, liu2024dora, wang2024lora, meng2024pissa, buyukakyuz2024olora}.

\section{Method}
CLoQ is designed to enhance the fine-tuning of quantized LLMs by incorporating calibration data, building upon OPTQ \cite{frantar2022gptq} with a specific focus on the initialization of LoRA adapters. It utilizes second-order information derived from input activations $\X$ to ensure that the low-rank adapter matrices 
$\A$ and $\B$ are initialized in a manner that minimizes quantization error, particularly in relation to the data distribution. This alignment allows CLoQ to improve the fine-tuning process by making the low-rank adaptation more responsive to the model’s behavior during calibration, thereby reducing the mismatch between the quantized model and its full-precision counterpart. Importantly, CLoQ uses the same calibration data as OPTQ, ensuring that the initialization of LoRA adapters is both universal and effective across various downstream tasks. In our experiments, we use the Wikitext dataset for calibration, offering a robust and generalizable foundation for fine-tuning across a range of task domains.

\subsection{Calibrated Quantization and Low-Rank Initialization}
Given the activation matrix $\X$ associated with the calibration data, CLoQ aims to solve the following optimization problem for LoRA initialization:
\begin{equation} \label{eq:dalq}
\min_{\Q\in\mathbb{Q}, \A\in\R^{m\times r},\B\in\R^{n \times r}} \| \X(\Q + \A \B^\top - \W) \|^2_\mathrm{F},
\end{equation}
where $\X\in\R^{(b\cdot l) \times m}$ is the activation matrix associated with a calibration data set of $b$ samples, each represented as an 
$l\times m$ sub-matrix and stacked along the row dimension.
$\mathbb{Q}\subset\R^{m\times n}$ is an appropriate set of all feasible quantized weights. The objective of (\ref{eq:dalq}) is to ensure that the initialized weights for the LoRA model, $\Q + \A\B^\top$, closely approximate the pre-trained model weights $\W\in\R^{m\times n}$ when applied to the activation matrix.

This problem can, in theory, be solved using an alternating minimization (AltMin) method, whose $t$-th iteration reads:
\begin{align*}
     & \Q^{t+1} = \arg\min_{\Q\in\mathbb{Q}} \|\X(\Q + \A^{t}(\B^{t})^\top-\W)\|^2_\mathrm{F} \\
    & \A^{t+1}, \B^{t+1} = \arg\min_{\A, \B} \|\X(\A\B^\top +\Q^{t+1} -\W)\|^2_\mathrm{F}
\end{align*}
In practice, we observe that it suffices to just perform \emph{a single iteration} with the initialization $\A^0 (\B^0)^\top = 0$. Therefore, the proposed CLoQ method comprises two steps: a quantization step and a low-rank approximation step under a linear transformation, which we detail in the rest of this section.

After solving the above problem and obtaining $\Q$, $\A$, $\B$, we fix the quantized weights $\Q$ and update only the low-rank components $\A$, $\B$ during the subsequent fine-tuning stage, as per the LoRA approach.

\subsubsection{Post-Training Quantization} 
With $\A\B^\top$ initialized to zero, the problem of finding 
$\Q$ simplifies to the standard layer-wise post-training quantization (PTQ): 
\begin{equation}\label{eq:quant}
\min_{\Q \in \mathbb{Q}} \|\X(\Q-\W)\|^2_\mathrm{F},
\end{equation}
an optimization problem that has been extensively studied in recent literature \cite{frantar2022optq,zhang2023post,chee2024quip,xiao2023smoothquant,zhang2024comq}. For this, we adopt the widely-used OPTQ method \cite{frantar2022gptq}. To further enhance OPTQ’s performance, we incorporate a preprocessing technique called weight magnitude reduction (MagR) \cite{zhang2024magr}, which modifies $\W$ by removing outliers prior to quantization. MagR preprocessing significantly improves OPTQ’s effectiveness in the low-bit regime while introducing minimal additional time beyond the OPTQ process and incurring no computational or memory overhead during inference time.

\subsubsection{Generalized Low-rank Approximation}
After obtaining $\Q$ in the quantization step, we denote by 
$$
\Delta\W = \W - \Q
$$ 
the residual of the quantized weights. 
To determine $\A$ and $\B$, we solve the following low-rank approximation problem under the linear transformation induced by $\X$:
\begin{equation}\label{eq:lr}
    \min_{\A\in\R^{m\times r}, \B\in\R^{n\times r}} \|\X(\A\B^\top- \Delta\W)\|^2_\mathrm{F}.
\end{equation}
It is important to note that problem (\ref{eq:lr}) is non-trivial due to the presence of the matrix $\X$, and its optimal solution is not given by directly computing the low-rank approximation (or SVD) of $\Delta\W$. However, we demonstrate in the following result that the problem can be \emph{solved accurately by performing just two SVDs}:

\begin{theorem}\label{thm:lr}
    Suppose the activation matrix $\X\in\R^{(b\cdot l)\times m}$ ($b\cdot l \gg m$) is of full-rank. Suppose the Gram (or Hessian) matrix $\Hb = \X^\top \X\in\R^{m \times m}$ has the SVD $\Hb = \U_H \Sig_H \U_H^\top$ and denote by $\Rb = \Sig_H^{\frac{1}{2}}\U_H^\top$ the non-symmetric root of $\Hb$.
    Then any pair $(\A, \B)$ satisfying
    \begin{equation}\label{eq:optimal}
         \A\B^\top = \Rb^{-1}\mathrm{LR}_r(\Rb \Delta\W).
    \end{equation}
    permits an optimal solution to problem (\ref{eq:lr}). Here $\mathrm{LR}_r(\Rb \Delta\W) =\arg\min_{\mathrm{rank}(\Z)\leq r} \|\Z - \Rb \Delta\W\|^2_\mathrm{F}$ denotes the best rank-$r$ approximation of $\Rb \Delta\W$.
\end{theorem}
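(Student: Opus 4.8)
The plan is to remove the awkward factor $\X$ by absorbing it into a change of variables, thereby reducing problem~(\ref{eq:lr}) to an unconstrained best rank-$r$ approximation problem to which the Eckart--Young--Mirsky theorem applies directly. The first step is to rewrite the objective purely in terms of the Gram matrix: for any $\M\in\R^{m\times n}$,
\[
\|\X\M\|_\mathrm{F}^2 = \mathrm{Tr}(\M^\top\X^\top\X\M) = \mathrm{Tr}(\M^\top\Hb\M).
\]
Since $\X$ has full column rank (using $b\cdot l\gg m$), $\Hb$ is symmetric positive definite, so every diagonal entry of $\Sig_H$ is strictly positive and $\Rb=\Sig_H^{\frac12}\U_H^\top$ is invertible with $\Rb^\top\Rb = \U_H\Sig_H\U_H^\top = \Hb$. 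Hence $\|\X\M\|_\mathrm{F}^2 = \mathrm{Tr}(\M^\top\Rb^\top\Rb\M) = \|\Rb\M\|_\mathrm{F}^2$, and taking $\M = \A\B^\top-\Delta\W$ recasts (\ref{eq:lr}) as $\min_{\A,\B}\|\Rb\A\B^\top-\Rb\Delta\W\|_\mathrm{F}^2$.

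Next I would perform the substitution $\Z = \Rb\A\B^\top$ and argue that, as $\A\in\R^{m\times r}$ and $\B\in\R^{n\times r}$ range freely, $\Z$ ranges over exactly the set $\{\Z\in\R^{m\times n}:\mathrm{rank}(\Z)\le r\}$. On the one hand, $\mathrm{rank}(\Rb\A\B^\top)\le\mathrm{rank}(\B^\top)\le r$. On the other hand, given any $\Z$ with $\mathrm{rank}(\Z)\le r$, one writes a rank factorization $\Z=\tilde\U\tilde\V^\top$ with $\tilde\U\in\R^{m\times r}$, $\tilde\V\in\R^{n\times r}$ (padding with zero columns if the rank is strictly less than $r$) and sets $\A=\Rb^{-1}\tilde\U$, $\B=\tilde\V$, which gives $\Rb\A\B^\top=\Z$; invertibility of $\Rb$ is essential here. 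Consequently the minimization equals $\min_{\mathrm{rank}(\Z)\le r}\|\Z-\Rb\Delta\W\|_\mathrm{F}^2$, whose optimal value is attained at $\Z^\star=\mathrm{LR}_r(\Rb\Delta\W)$ by the Eckart--Young--Mirsky theorem.

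Finally I would translate back: since $\Z\mapsto\Rb^{-1}\Z$ is a bijection between the feasible sets that preserves the objective value, any pair $(\A,\B)$ with $\Rb\A\B^\top=\Z^\star$, equivalently $\A\B^\top=\Rb^{-1}\mathrm{LR}_r(\Rb\Delta\W)$, attains the common minimum and therefore solves (\ref{eq:lr}); such a pair exists because $\Rb^{-1}\Z^\star$ has rank at most $r$ and hence factors into an $m\times r$ times an $n\times r$ matrix (e.g.\ read off from its SVD). The whole procedure uses precisely two SVDs — one of $\Hb$ to build $\Rb$, and one of $\Rb\Delta\W$ to build $\mathrm{LR}_r(\Rb\Delta\W)$.

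I do not expect a serious obstacle; the only step needing genuine care is the change of variables, where one must verify that left multiplication by the invertible matrix $\Rb$ is a rank-preserving bijection between products $\A\B^\top$ and rank-$\le r$ matrices $\Z$, so that problem~(\ref{eq:lr}) and the constrained rank problem share the same optimal value and corresponding minimizers. Positive definiteness of $\Hb$ (equivalently, full column rank of $\X$) is exactly what makes $\Rb$ invertible and underpins this reduction.
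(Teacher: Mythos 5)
Your proposal is correct and follows essentially the same route as the paper's proof: rewrite the objective as $\|\Rb(\A\B^\top-\Delta\W)\|_\mathrm{F}^2$ via $\Hb=\Rb^\top\Rb$, reduce to a best rank-$r$ approximation solved by Eckart--Young, and use invertibility of $\Rb$ to translate back. The only difference is that you spell out in more detail the change-of-variables bijection between products $\A\B^\top$ and rank-$\le r$ matrices, a step the paper states more tersely.
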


\begin{proof}
Firstly, we observe that the objective in (\ref{eq:lr}) has the following equivalent expression:
\begin{equation*}
\begin{aligned}    
    & \; \|\X(\A\B^\top - \Delta\W)\|^2_\mathrm{F} \\
    = & \; \mathrm{Tr} \big( (\A\B^\top - \Delta\W)^\top \X^\top \X (\A\B^\top - \Delta\W) \big) \\ 
    = & \; \mathrm{Tr} \big( (\A\B^\top - \Delta\W)^\top \Hb (\A\B^\top - \Delta\W) \big) \\
    = & \; \mathrm{Tr} \big( (\A\B^\top - \Delta\W)^\top \Rb^\top \Rb (\A\B^\top - \Delta\W) \big) \\
    = & \; \|\Rb(\A\B^\top - \Delta\W)\|^2_\mathrm{F} \\
    = & \; \|\Rb\A\B^\top - \Rb\Delta\W\|^2_\mathrm{F},
\end{aligned}
\end{equation*}
where in the third equality, we used the identity: 
$\Hb = \Rb^\top \Rb.$
Moreover, since \(\X\) is full rank, $\Hb$ is invertible, and so is $\Rb$. 

Based on these facts, we interpret problem (\ref{eq:lr}) as finding the standard best rank-$r$ approximation of $\Rb \Delta\W$, $\mathrm{LR}_r(\Rb \Delta\W)$, which can be obtained by performing the SVD of $\Rb \Delta\W$ and extracting the top-$r$ principal components \cite{eckart1936approximation}. Then, $(\A, \B)$ is an optimal solution to 
$$
\min_{\A\in\R^{m\times r}, \B\in\R^{n\times r}} \|\Rb\A\B^\top - \Rb\Delta\W\|^2_\mathrm{F}
$$
if and only if 
$$
\Rb\A\B^\top = \mathrm{LR}_r(\Rb \Delta\W).
$$

Consequently, since $\Rb$ is invertible, any $(\A, \B)$ fulfilling
$$
    \A\B^\top = \Rb^{-1}\mathrm{LR}_r(\Rb \Delta\W).
    $$     
permits an optimal solution to problem (\ref{eq:lr}).
\end{proof}

To apply Theorem \ref{thm:lr} to solve problem (\ref{eq:lr}), we make the following observations:

\begin{itemize}
    \item One SVD is required to compute $\Rb$ and another is needed to determine $\mathrm{LR}_r(\Rb \Delta\W)$. 
    Given that $\Rb\in\R^{m \times m}$ and $\Rb \Delta\W\in\R^{m \times n}$, while $\X\in\R^{(b\cdot l)\times m}$, the computational complexity of SVDs required for solving (\ref{eq:lr}) is independent of $b\cdot l$, which is significantly larger than $m$ or $n$ in practice. Here $l$ represents the context length, and $b$ denotes the size of the calibration dataset. We note that CLoQ requires fewer SVD computations than LoftQ \cite{li2023loftq}, which, by default, performs five AltMin iterations, each involving one SVD. Although CLoQ uses the more expensive GPTQ for the quantization (versus the data-free RTN in LoftQ), their overall initialization runtimes are comparable, as shown in Table \ref{tab: duration and mem}.

    \item Indeed, (\ref{eq:optimal}) admits infinitely many optimal solutions. Suppose $\mathrm{LR}_r(\Rb \Delta\W)$ has the form $\U_{:r} \Sig_{:r} \V_{:r}^\top$. In our experiments, we consistently take $\A = \Rb^{-1} \U_{:r}\Sig_{:r}$ and $\B = \V_{:r}$, which empirically performs well. However, it is clear that if $(\A, \B)$ is an optimal solution, then for any invertible $\boldsymbol{C}\in\R^{r\times r}$, the pair $\left( \A \boldsymbol{C}, \, \B (\boldsymbol{C}^{-1})^{\top} \right)$ also satisfies (\ref{eq:optimal}) and thus provides an optimal solution to (\ref{eq:lr}). For example, $(\A,\B) = (\Rb^{-1} \U_{:r}, \,\V_{:r}\Sig_{:r})$ or $(\Rb^{-1} \U_{:r}\Sig_{:r}^{\frac{1}{2}}, \, \V_{:r}\Sig_{:r}^{\frac{1}{2}})$. In Section 5, our ablation study shows that the combination $(\Rb^{-1} \U_{:r}\Sig_{:r}, \V_{:r})$ gives the best practical performance during the subsequent LoRA fine-tuning process. A more comprehensive theoretical analysis of the impact of initialization like \cite{hayou2024impact, li2024crucial} will be addressed in future work.

    \item When \( \Hb \) is not invertible or poorly conditioned, we propose adding a small constant   \(\lambda\)  to the diagonal elements. Specifically, we typically set  \(\lambda = 0.01 \frac{\mathrm{Tr}(\Hb)}{m}\), following a strategy similar to that used in the prior works \cite{frantar2022optq,chee2024quip}. This adjustment has consistently proven effective in mitigating numerical issues and ensuring stability during computations.

    \item In theory, even if $\X$ is rank-deficient, the optimality condition $\Rb\A\B^\top = \mathrm{LR}_r(\Rb \Delta\W)$ in the proof of Theorem 3.1 still holds. But in this case, since $\Rb$ is also rank-deficient, this optimality condition permits infinitely many solution for $\A\B^\top$, and we may simply take $\A\B^\top = \Rb^{\dagger} \mathrm{LR}_r(\Rb \Delta\W)$, where $\Rb^{\dagger}$ is a pseudo-inverse.
    
\end{itemize}
To summarize, our proposed CLoQ method is detailed in Algorithm \ref{alg:dalq}.

\begin{algorithm}
\caption{CLoQ for initializing one linear layer}
\label{alg:dalq}
\textbf{Input:} Regularized Gram matrix of activations \( \Hb = \X^\top \X + \lambda \I \in \R^{m\times m} \), Pre-trained weight matrix $\W\in\R^{m \times n}$, Rank \(r \ll \min(m,n)\). \\
\textbf{Output:} Quantized weight matrix $\Q\in\R^{m \times n}$, Low-rank components $\A\in\R^{m \times r}$, $\B\in\R^{n \times r}$.

\begin{algorithmic}[1]
\State Solve (\ref{eq:quant}) to obtain the quantized weights $\Q$. 
\State Compute the residual of quantized weights \( \Delta\W = \W - \Q \in\R^{m\times n} \)
\State Perform SVD: \( \Hb = \U_H \Sig_H \U_H^\top \)
\State Evaluate: \( \Rb = \Sig_H^{\frac{1}{2}} \U_H^\top\)
\State Perform the SVD of $\Rb \Delta\W$ to find its best rank-$r$ approximation: \( \mathrm{LR}_r(\Rb \Delta\W) = \U_{:r} \Sig_{:r} \V_{:r}^\top \)
\State Compute the low-rank components:
\[
\A = \Rb^{-1} \U_{:r} \Sig_{:r}
\]
\[
\B = \V_{:r}
\]
\end{algorithmic}
\textbf{Return:} \(\Q, \A, \B\)
\end{algorithm}



\section{Experiment}


In this section, we evaluate the effectiveness of CLoQ on language modeling, arithmetic reasoning, and commonsense reasoning tasks. The fine-tuning through CLoQ consists of two key stages: the initialization step and the fine-tuning step. In the initialization step, we quantize the full-precision weight $\W$ into low-precision weight $\Q$ and find optimal low-rank matrices $\A$ and $\B$ that minimize the residual error. In the finetuning step, the quantized weight matrix $\Q$ is fixed in low-precision, while $\A$ and $\B$ are trained through back-propagation. The detailed hyperparameter settings for all our experiments are presented in the Appendix \ref{Appendix}.

\noindent {\bf Models and Datasets.} We test CLoQ on Llama2-7b, Llama2-13b \cite{touvron2023llama2}, Llama3-8b \cite{grattafiori2024llama} and Mistral-7b-v0.1 \cite{jiang2023mistral} models. Following prior works \cite{frantar2022gptq}, we randomly sample 128 instances, each with a context length of 2048 tokens, from the WikiText-2 dataset \cite{merity2016pointer} to serve as the calibration set for quantization. Then, we fine-tune and evaluate the models on WikiText-2 for language modeling. For single arithmetic reasoning tasks, we fine-tune and evaluate on the GSM8K \cite{cobbe2021training}. For multi arithmetic reasoning, we fine-tune the models on Math10K \cite{hu2023llm} and then evaluate the test sets of AQuA \cite{ling2017program}, GSM8K, MAWPS \cite{koncel2016mawps} and SVAMP \cite{patel2021nlp}. For commonsense reasoning tasks, we fine-tune the models on Commonsense170K \cite{hu2023llm} and evaluate on eight representative tasks: BoolQ \cite{clark2019boolq}, PIQA \cite{bisk2020piqa}, SIQA \cite{sap2019socialiqa}, HellaSwag \cite{zellers2019hellaswag}, WinoGrande \cite{sakaguchi2021winogrande}, ARC-e, ARC-c \cite{clark2018think} and OBQA \cite{mihaylov2018can}.

\noindent {\bf Baselines.} We compare with LoRA \cite{hu2021lora}, QLoRA \cite{dettmers2023qlora}, GPTQ-LoRA \cite{GPTQLoRA}, LoftQ \cite{li2023loftq} and ApiQ \cite{liao2024apiq}. LoRA is often considered as the benchmark for fine-tuning performance. QLoRA incorporates NF-quantization, with its low-rank initialization aligning with the standard LoRA method. GPTQ-LoRA integrates OPTQ for the base weights and fine-tunes the LoRA component while preserving its low-rank initialization as in the standard LoRA, with the quantized weights kept frozen. In contrast, LoftQ and LQ-LoRA carefully initialize the quantized weight and low-rank matrices by solving some optimization problems to minimize the approximation error. Furthermore, ApiQ uses gradient-based block-wise optimization to specifically initialize the low-rank components during post-training quantization.

\subsection{Implementation details}

{\bf Quantization.} We quantize the weights of all linear layers in the base model using MagR preprocessing \cite{zhang2024magr} followed by OPTQ \cite{frantar2022gptq}. \emph{The quantization scheme employs uniform (a.k.a. \texttt{INT}) and asymmetric quantization, with a default group size of 64}. After quantization, we compute the LoRA components 
$\A$ and $\B$, which are maintained in \texttt{FP16} precision.

{\bf Fine-tuning.} Following prior works \cite{dettmers2023qlora, li2023loftq, liao2024apiq}, we fine-tune the models using the standard LoRA configuration, with modifications to the LoRA initialization and learning rates. The quantized weights remain fixed, and only the LoRA adapter matrices are trainable during fine-tuning. The rank of the LoRA adapters is consistently set to 64 across all methods. For optimization, we use AdamW \cite{loshchilov2017decoupled}. All experiments are conducted on NVIDIA A100 GPUs with 80GB of memory.

\subsection{Fine-tuning Results}

{\bf Language modeling.}
We evaluate the models by reporting the perplexity of language generation on WikiText-2. As shown in Table \ref{tab: wiki and gsm8k} and \ref{tab: wiki and gsm8k llama3 and mistal}, CLoQ consistently outperforms existing methods across most bit levels and model architectures. Notably, at \texttt{INT2}, CLoQ proves effective, achieving a perplexity reduction of 0.95 over ApiQ-lw and 1.34 over LoftQ on Llama2-7B, and even surpasses ApiQ-bw by 0.1 perplexity. This result highlights CLoQ’s ability to maintain superior performance even under ultra-low bit quantization constraints.

\begin{table}[th]
  \setlength{\tabcolsep}{11pt}
  \centering
  \caption{Finetuning results of WikiText and GSM8K on Llama2-7B and Llama2-13B.\label{tab: wiki and gsm8k}}
  \resizebox{0.6\textwidth}{!}{
  \begin{tabular}{ll|cc|cc}
  \toprule
  & & \multicolumn{2}{c|}{\textbf{Llama2-7B}} & \multicolumn{2}{c}{\textbf{Llama2-13B}}  \\
  \textbf{Method} & \textbf{Bit} & \textbf{Wiki} (ppl$\downarrow$) & \textbf{GSM8K} (acc$\uparrow$) & \textbf{Wiki} (ppl$\downarrow$) & \textbf{GSM8K} (acc$\uparrow$)  \\
  \midrule
  LoRA & 16 & 5.08 & 36.9 & 5.12 & 45.3  \\
  \midrule
  QLoRA & 4 & 5.70 & 35.1 & 5.22 & 39.9 \\
  LoftQ & 4 & \textbf{5.24} & 35.0 & 5.16 & 45.0 \\
  ApiQ-lw & 4 & 5.28 & 36.4 & 4.78 & 50.4\\
  ApiQ-bw & 4 & 5.27 & 39.8 & 4.78 & \textbf{51.2} \\
  CLoQ & 4 & 5.25 & \textbf{40.6} & \textbf{4.78} & 49.3 \\
  \midrule
  QLoRA & 3 & 5.73 & 32.1 & 5.22 & 40.7 \\
  LoftQ & 3 & 5.63 & 32.9 & 5.13 & 44.4 \\
  ApiQ-lw & 3 & 5.53 & 36.0 & 4.98 & 45.4 \\
  ApiQ-bw & 3 & 5.49 & 39.3 & 4.96 & 47.6 \\
  CLoQ & 3 & \textbf{5.45} & \textbf{39.9} & \textbf{4.92} & \textbf{48.1} \\
  \midrule
  QLoRA & 2 & N.A. & N.A. & N.A. & N.A. \\
  LoftQ & 2 & 7.85 & 20.9 & 7.69 & 25.4 \\
  ApiQ-lw & 2 & 7.46 & 26.0 & 6.29 & 36.3 \\
  ApiQ-bw & 2 & 6.61 & 33.5 & 5.79 & 41.2 \\
  CLoQ & 2 & \textbf{6.51} & \textbf{33.7} & \textbf{5.73} & \textbf{41.7} \\
  \bottomrule
  \end{tabular}
  }
  \centering
\end{table}
\begin{table}[th]
  \setlength{\tabcolsep}{11pt}
  \centering
  \caption{Finetuning results of WikiText and GSM8K on Llama3-8B and Mistral-7B.\label{tab: wiki and gsm8k llama3 and mistal}}
  \resizebox{0.6\textwidth}{!}{
  \begin{tabular}{ll|cc|cc}
  \toprule
  & & \multicolumn{2}{c|}{\textbf{Llama3-8B}} & \multicolumn{2}{c}{\textbf{Mistral-7B}}  \\
  \textbf{Method} & \textbf{Bit} & \textbf{Wiki} (ppl$\downarrow$) & \textbf{GSM8K} (acc$\uparrow$) & \textbf{Wiki} (ppl$\downarrow$) & \textbf{GSM8K} (acc$\uparrow$)  \\
  \midrule
  LoRA & 16 & 6.34 & 47.8 & 5.17 & 52.8  \\
  \midrule
  LoftQ & 2 & 14.09 & 31.6 & 1849.33 & 1.7 \\
  ApiQ-lw & 2 & - & - & 7.18 & 41.3 \\
  ApiQ-bw & 2 & 9.89 & 44.4 & 6.69 & \textbf{45.0} \\
  CLoQ & 2 & \textbf{9.49} & \textbf{45.4} & \textbf{6.68} & \textbf{45.0} \\
  \bottomrule
  \end{tabular}
  }
  \centering
\end{table}

{\bf Arithmetic reasoning (single task).}
To evaluate the models on GSM8K, we extract numerical answers from the generated solutions and determine accuracy by analyzing these extracted values. As shown in Table \ref{tab: wiki and gsm8k} and \ref{tab: wiki and gsm8k llama3 and mistal}, CLoQ achieves better performance across different model types and quantization bit levels. At \texttt{INT2}, CLoQ reaches an accuracy of $33.7\%$ on Llama2-7B, which CLoQ achieves a $7.7\%$ improvement in performance compared with ApiQ-lw. Moreover, on the Llama3-8B model, CLoQ even outperforms ApiQ-bw by $1.0\%$ at \texttt{INT2}. More remarkably, CLoQ achieves comprehensive superiority over all existing methods at \texttt{INT2} and \texttt{INT3} on Llama2-13B. On the Mistral-7B model, the accuracy of \texttt{INT2} CLoQ is better than ApiQ-lw and is comparable with ApiQ-bw.

{\bf Arithmetic reasoning.}
To evaluate CLoQ's effectiveness across multiple arithmetic reasoning tasks, we fine-tuned models on the Math10K dataset and assessed their performance on four separate math reasoning benchmarks, demonstrating their adaptability to diverse mathematical challenges. As shown in Table \ref{tab: arithmetic reasoning} and \ref{tab: arithmetic reasoning llama3}, CLoQ consistently outperforms other methods across various model architectures and quantization levels, achieving superior accuracy both on average and on individual datasets. Notably, at \texttt{INT2} quantization, CLoQ delivers substantial gains, surpassing ApiQ-bw by $1.9\%$ on LLaMA2-7B and by $2.1\%$ on both LLaMA3-8B and LLaMA2-13B. Even at \texttt{INT4}, CLoQ surpasses both LoRA and QLoRA on Llama2-7B. Moreover, for complex reasoning tasks such as GSM8K and SVAMP, CLoQ achieves 4.1\% and 1.7\% higher accuracy than ApiQ-bw at \texttt{INT2} on Llama3-8B, highlighting its robustness in challenging problem settings. These results underscore CLoQ’s remarkable potential in handling intricate reasoning tasks with enhanced accuracy, even under ultra-low-bit quantization.

\begin{table}[!h]
  \setlength{\tabcolsep}{4.5pt}
  \small
  \centering
  
  \caption{Accuracy on four arithmetic reasoning tasks. The LoRA rank $r$ is 64 for all methods.\label{tab: arithmetic reasoning}}
  {\scriptsize
  \begin{tabular}{lc|cccc|c|cccc|c}
  \toprule
  & & \multicolumn{5}{c|}{\textbf{Llama2-7B}} & \multicolumn{5}{c}{\textbf{Llama2-13B}} \\
  \textbf{Method} & \textbf{Bit} & \textbf{GSM8K} & \textbf{SVAMP} & \textbf{MAWPS} & \textbf{AQuA} & \textbf{Avg.} $\uparrow$ & \textbf{GSM8K} & \textbf{SVAMP} & \textbf{MAWPS} & \textbf{AQuA} & \textbf{Avg.} $\uparrow$ \\
  \midrule
  LoRA & 16 & 43.6 & 59.4 & 85.0 & 27.0 & 53.7 & 55.3 & 67.7 & 87.4 & 24.4 & 58.7 \\
  \midrule
  QLoRA & 4 & 42.7 & 58.7 & \textbf{87.3} & 26.4 & 53.7 & 54.8 & \textbf{69.4} & 87.0 & 26.8 & \textbf{59.5} \\
  GPTQ-LoRA & 4 & 43.0 & 58.4 & 86.1 & 24.3 & 52.9 & 53.2 & 67.5 & 85.3 & 25.6 & 57.9 \\
  LoftQ & 4 & 41.7 & 56.0 & 86.3 & 25.3 & 52.3 & 54.9 & 66.5 & 87.7 & 23.9 & 58.3 \\
  ApiQ-bw & 4 & 43.2 & 59.0 & 85.7 & 26.0 & 53.5 & 55.3 & 67.4 & 87.8 & 25.6 & 59.0 \\
  CLoQ & 4 & \textbf{43.6} & \textbf{60.3} & 87.0 & \textbf{27.6} & \textbf{54.6} & \textbf{55.4} &  66.9 & \textbf{88.7} & \textbf{27.2} & \textbf{59.5} \\
  \midrule
  QLoRA & 3 & 1.4 & 1.4 & 0.7 & 3.4 & 1.7 & 0.8 & 2.5 & 0.3 & 6.2 & 2.4 \\
  GPTQ-LoRA & 3 & 38.9 & 55.7 & 84.9 & 23.2 & 50.7 & 50.6 & 65.2 & 88.0 & 22.6 & 56.6 \\
  LoftQ & 3 & 39.9 & 56.3 & 86.3 & \textbf{26.4} & 52.2 & \textbf{53.9} & 66.1 & 87.0 & 23.6 & 57.7 \\
  ApiQ-bw & 3 & 41.4 & 55.9 & 87.0 & 25.2 & 52.4 & 51.5 & \bf 67.4 & 88.5 & 25.6 & 58.3 \\
  CLoQ & 3 & \textbf{42.2} & \textbf{58.9} & \textbf{89.1} & 24.0 & \textbf{53.6} & 52.5 & 66.1 & \textbf{89.1} & \textbf{28.0} & \textbf{58.9} \\
  \midrule
  QLoRA & 2 & 0.9 & 1.5 & 0.8 & 5.1 & 2.1 & 0.5 & 0.7 & 0.1 & 0.9 & 0.6 \\
  GPTQ-LoRA & 2 & 21.7 & 39.0 & 76.6 & 22.1 & 39.9 & 31.9 & 49.6 & 82.5 & 23.6 & 46.9 \\
  LoftQ & 2 & 29.5 & 45.8 & 83.6 & 23.2 & 45.6 & 37.0 & 55.9 & 87.7 & 21.7 & 50.6 \\
  ApiQ-bw & 2 & 31.2 & 51.0 & 82.9 & 23.9 & 47.3 & 43.1 & \bf 59.2 & 85.1 & 23.4 & 52.7 \\
  CLoQ & 2 & \textbf{34.7} & \textbf{52.0} & \textbf{86.1} & \textbf{24.1} & \textbf{49.2} & \textbf{44.6} & 57.6 & \textbf{88.7} & \textbf{28.4} & \textbf{54.8} \\
  \bottomrule
  \end{tabular}
  }
  \centering
\end{table}
\begin{table}[!h]
  \setlength{\tabcolsep}{4.5pt}
  \small
  \centering
  
  \caption{Accuracy on four arithmetic reasoning tasks. The LoRA rank $r$ is 64 for all methods. CLoQ accuracies are averaged over 5 random runs, with standard deviations reported. \label{tab: arithmetic reasoning llama3}}
  {\scriptsize
  \begin{tabular}{lc|cccc|c}
  \toprule
  & & \multicolumn{5}{c}{\textbf{Llama3-8B}} \\
  \textbf{Method} & \textbf{Bit} & \textbf{GSM8K} & \textbf{SVAMP} & \textbf{MAWPS} & \textbf{AQuA} & \textbf{Avg.} $\uparrow$ \\
  \midrule
  LoRA & 16 & 68.8 & 76.1 & 90.3 & 31.5 & 66.7  \\
  \midrule
  LoftQ & 2 & 35.6 & 52.1 & 87.0 & 25.2 & 50.0 \\
  ApiQ-bw & 2 & 47.0 & 67.2 & 88.2 & 27.2 & 57.4 \\
   CLoQ & 2 & $50.7_{\pm 0.35}$ & $67.5_{\pm0.86}$ &  $88.3_{\pm 0.91}$ & $29.5_{\pm 0.56}$ & $59.0_{\pm 0.34}$ \\
  \bottomrule
  \end{tabular}
  }
  \centering
\end{table}

{\bf Commonsense reasoning.}
We further evaluate the effectiveness of CLoQ on commonsense reasoning tasks by fine-tuning the models on the Commonsense170K dataset and testing their performance across eight reasoning benchmarks, as presented in Table \ref{tab: commonsense reasoning results}. Consistent with its performance on arithmetic reasoning tasks, CLoQ outperforms other methods across different model sizes and quantization levels, achieving notable improvements both on average and within each dataset.

At \texttt{INT2}, CLoQ delivers a performance boost over ApiQ-bw, with an average accuracy improvement exceeding 0.7\%, even approaching the performance levels typically observed with \texttt{INT4} QLoRA. For instance, \texttt{INT2} CLoQ exhibits only a minimal average accuracy drop of 0.04\% compared to \texttt{INT4} QLoRA. Additionally, at \texttt{INT4}, CLoQ attains impressive accuracy, reducing the gap to \texttt{FP16} LoRA to just 0.4\%. These results indicate that CLoQ significantly enhances LoRA’s learning capacity, enabling it to better adapt to a diverse range of tasks.

\begin{table*}[!h]
  \setlength{\tabcolsep}{5pt}
  \small
  \centering
  
  \caption{Accuracy on eight commonsense reasoning tasks. The LoRA rank $r =64$ for all methods.\label{tab: commonsense reasoning results}}
  {\scriptsize
  \begin{tabular}{llc|cccccccc|c}
  \toprule
  \textbf{Model} & \textbf{Method} & \textbf{Bit} & \textbf{BoolQ} & \textbf{PIQA} & \textbf{SIQA} & \textbf{HellaS.} & \textbf{WinoG.} & \textbf{ARC-e} & \textbf{ARC-c} & \textbf{OBQA} & \textbf{Avg.} $\uparrow$ \\
  \midrule
  & LoRA & 16 & 73.6 & 86.5 & 81.8 & 95.2 & 86.9 & 89.4 & 76.7 & 86.7 & 84.6 \\
  \cmidrule(lr){2-12}
  & QLoRA & 4 & 73.9 & 84.4 & 79.7 & 93.3 & 84.6 & 86.1 & 73.0 & 85.1 & 82.5 \\
  & GPTQ-LoRA & 4 & 73.4 & 83.6 & 79.3 & 93.3 & 84.5 & 86.5 & 72.8 & 83.3 & 82.1\\
  & LoftQ & 4 & 73.7 & 86.0 & 81.1 & 94.6 & 86.3 & 88.1 & 75.5 & 86.2 & 83.9\\
  & ApiQ-bw & 4 & 73.5 & \textbf{87.0} & \textbf{82.0} & \textbf{95.2} & \textbf{86.9} & \textbf{89.5} & \textbf{77.0} & 86.2 & \textbf{84.7} \\
  & CLoQ & 4 & \textbf{74.2} & 86.3 & 81.6 & 95.1 & 85.9 & 88.7 & 75.7 & \textbf{86.4} & 84.2 \\
  \cmidrule(lr){2-12}
  Llama2-7B & GPTQ-LoRA & 3 & 71.8 & 82.7 & 79.3 & 92.1 & 82.8 & 84.2 & 70.6 & 83.4 & 80.8\\
  & LoftQ & 3 & \textbf{74.0} & 85.6 & 81.0 & 94.3 & 85.6 & 88.1 & \textbf{75.4} & 85.5 & 83.7 \\
  & ApiQ-bw & 3 & 73.3 & 85.6 & \textbf{81.8} & 94.6 & \textbf{86.9} & 87.9 & 73.7 & \textbf{86.4} & \textbf{83.8} \\
  
  & CLoQ & 3 & 73.5 & \textbf{86.1} & 81.2 & \textbf{94.8} & 85.4 & \textbf{88.6} & 75.1 & 85.0 & 83.7 \\
  \cmidrule(lr){2-12}
  & GPTQ-LoRA & 2 & 62.2 & 49.5 & 33.3 & 25.1 & 49.4 & 25.0 & 22.6 & 27.6 & 36.8\\
  & LoftQ & 2 & 62.4 & 70.5 & 73.4 & 78.8 & 71.0 & 66.5 & 50.8 & 62.3 & 67.0 \\
  & ApiQ-bw & 2 & 68.4 & 80.7 & \textbf{79.6} & 91.4 & \textbf{82.4} & 82.7 & 68.3 & 80.5 & 79.3\\
  & CLoQ & 2 & \textbf{70.2} & \textbf{82.2} & 78.9 & \textbf{91.7} & 82.2 & \textbf{83.6} & \textbf{70.7} & \textbf{81.4} & \textbf{80.1} \\
  \midrule
  & LoRA & 16 & 76.3 & 88.5 & 83.4 & 96.5 & 89.6 & 92.8 & 81.7 & 89.6 & 87.3\\
  \cmidrule(lr){2-12}
  & QLoRA & 4 & 74.9 & 86.6 & 81.5 & 94.9 & 86.9 & 89.1 & 77.1 & 87.2 & 84.8\\
  & GPTQ-LoRA & 4 & 74.5 & 86.1 & 81.8 & 94.7 & 86.8 & 89.0 & 77.1 & 84.5 & 84.3 \\
  & LoftQ & 4 & 76.0 & 87.9 & 82.8 & 95.8 & 88.9 & 91.2 & 80.8 & 88.8 & 86.5 \\
  & ApiQ-bw & 4 & \bf 76.2 & \bf 88.5 & \bf 83.5 & \bf 96.6 & \bf 90.0 & \bf 92.1 & 81.2 & 89.9 & \bf 87.3 \\
  & CLoQ & 4 & 75.7 & 88.4 & 82.9 & 96.3 & 89.4 & 91.1 & \textbf{81.9} & \textbf{90.0} & 87.0 \\
  \cmidrule(lr){2-12}
  Llama2-13B & GPTQ-LoRA & 3 & 73.5 & 85.2 & 81.1 & 94.1 & 85.7 & 87.9 & 75.5 & 85.3 & 83.5\\
  & LoftQ & 3 & 75.2 & 87.8 & 82.8 & \textbf{96.3} & 89.5 & 91.1& \textbf{81.4} & 88.0 & 86.5\\
  & ApiQ-bw & 3 & \bf 76.0 & 88.0 & 82.3 & 95.8 & 89.1 & 91.1 & 81.1 & 89.5 & 86.6 \\
  & CLoQ & 3 & 75.3 & \textbf{88.1} & \textbf{82.8} & 95.9 & \textbf{90.1} & \textbf{91.2} & 80.7 & \textbf{90.6} & \textbf{86.8} \\
  \cmidrule(lr){2-12}
  & GPTQ-LoRA & 2 & 62.2 & 50.1 & 34.0 & 25.1 & 49.6 & 25.0 & 22.7 & 27.6 & 37.1\\
  & LoftQ & 2 & 65.9 & 76.4 & 78.0 & 84.4 & 76.1 & 75.1 & 60.1 & 72.7 & 73.6\\
  & ApiQ-bw & 2 & 73.1 & 85.2 & \bf 82.3 & 94.4 & 86.2 & 88.2 & 74.9 & \bf 85.9 & 83.8\\
  
  & CLoQ & 2 & \textbf{73.9} & \textbf{85.5} & 81.6 & \textbf{94.8} & \textbf{87.3} & \textbf{89.5} & \textbf{77.4} & 84.8 & \textbf{84.4} \\
  \bottomrule
  \end{tabular}
  }
  \centering
\end{table*}

{\bf Fine-tuning on mixed dataset.} We evaluate the performance of CLoQ for LoRA fine-tuning on a mixed dataset comprising Math10K and 5K samples from a Commonsense dataset, and assess accuracy on four arithmetic reasoning tasks. Interestingly, we find that fine-tuning on the mixed dataset leads to a drop in arithmetic reasoning accuracy compared to fine-tuning solely on the Math10K dataset, as shown by Table \ref{tab:mixed}. For example, for 4-bit Llama2-7B, the averaged accuracy of CLoQ fine-tuned on mixed dataset drops to 51.2\% from 54.6\% (in Table \ref{tab: arithmetic reasoning}). However, CLoQ still consistently outperforms LoftQ in this setting across different bit-widths.

\begin{table}[!h]
  \setlength{\tabcolsep}{4.5pt}
  \small
  \centering
  
  \caption{Accuracy of Llama2-7B for four arithmetic reasoning finetuned on mixed dataset. \label{tab:mixed}}
{\scriptsize
  \begin{tabular}{c|c|ccccc}
  \toprule
  &  & \multicolumn{5}{c}{\textbf{Arithmetic reasoning}} \\
  \textbf{Method} & \bf Bit & \textbf{GSM8K} & \textbf{SVAMP} & \textbf{MAWPS} & \textbf{AQuA} & \textbf{Avg.} $\uparrow$ \\
  \midrule
  LoftQ & 4 & 36.8 & \bf 55.3 & 83.2 & 24.8 & 50.0 \\
  CLoQ & 4 & \bf 38.8 & 54.9 & \bf  85.7 & \bf 25.2 & \bf 51.2 \\
  \midrule
LoftQ & 2 & 21.8 & 40.7 & 74.4 & 24.0 & 40.2 \\
CLoQ  & 2 & \bf 27.2 & \bf 47.0 & \bf 80.3 & \bf 24.4 & \bf 44.7 \\
  \bottomrule
  \end{tabular}
  }
  \centering
\end{table}

\subsection{Ablation study}


{\bf LoRA initialization with different $(\A , \B)$ combinations.}
Furthermore, we investigate the performance of various combinations of ($\A , \B$) in Algorithm \ref{alg:dalq}, as shown in Table \ref{tab: sigma ablation}. We tried three different combinations of $(\A,\B)$, including  $(\Rb^{-1} \U_{:r}, \,\V_{:r}\Sig_{:r})$, $(\Rb^{-1} \U_{:r}\Sig_{:r}^{\frac{1}{2}}, \, \V_{:r}\Sig_{:r}^{\frac{1}{2}})$, and the default choice $(\Rb^{-1} \U_{:r} \Sig_{:r}, \,\V_{:r})$. Table \ref{tab: sigma ablation} shows that the default combination of initialized adapters gives the best performance in the subsequent fine-tuning phase. 
\begin{table}[th]
  \centering
  \caption{Fine-tuning results of different combinations of ($\A , \B$) on WikiText-2 and GSM8K. The LoRA rank is $r= 64$. \label{tab: sigma ablation}}
  \resizebox{0.48\textwidth}{!}{
  \begin{tabular}{lc|cc}
  \toprule
  & & \multicolumn{2}{c}{\textbf{Llama2-7B}}  \\
  \textbf{Method} & \textbf{Bit} & \textbf{Wiki} (ppl$\downarrow$) & \textbf{GSM8K} (acc$\uparrow$)  \\
  \midrule
  LoRA & 16 & 5.08 & 36.9   \\
  \midrule
  $(\Rb^{-1} \U_{:r}, \,\V_{:r}\Sig_{:r})$ & 2 & 880.6 & 1.6  \\
  \midrule
  $(\Rb^{-1} \U_{:r}\Sig_{:r}^{\frac{1}{2}}, \, \V_{:r}\Sig_{:r}^{\frac{1}{2}})$ & 2 & 6.68 & 12.9  \\
  \midrule
  $(\Rb^{-1} \U_{:r} \Sig_{:r}, \,\V_{:r})$ & 2 & 6.51 & 33.7 \\
  
  \bottomrule
  \end{tabular}}
  \centering
\end{table}

{\bf Calibration data size.}
 We explore the sensitivity of CLoQ to the size of the calibration dataset. As shown in Table \ref{tab: calibration data size}, CLoQ exhibits strong robustness to the calibration size. Across both \texttt{INT4} and \texttt{INT2}, the overall performance remains consistently stable as the calibration size varies from 32 to 256. A calibration dataset of 128 samples, commonly used as the default in PTQ, yields slightly better results. These findings indicate that CLoQ does not heavily depend on the specific choice of calibration set size, which enhances its practicality and ease of deployment in real-world scenarios.

\begin{table}[!h]
  \setlength{\tabcolsep}{4.5pt}
  \small
  \centering
  
  \caption{Accuracy of different calibration dataset sizes for Llama2-7B. \label{tab: calibration data size}}
  {\scriptsize
  \begin{tabular}{lc|cc|ccccc}
  \toprule
  & & \multicolumn{2}{c|}{} & \multicolumn{5}{c}{\textbf{Arithmetic reasoning}} \\
  \textbf{Calibration Size} & \textbf{Bit} & \textbf{Wiki.} $\downarrow$ & \textbf{GSM8K} $\uparrow$ & \textbf{GSM8K} & \textbf{SVAMP} & \textbf{MAWPS} & \textbf{AQuA} & \textbf{Avg.} $\uparrow$ \\
  \midrule
  32 & 4 & 5.34 & 40.2 & 43.4 & 58.7 & 87.4 & 28.0 & 54.4 \\
  64 & 4 & 5.33 & \bf 40.9 & 44.7 & 57.6 & 87.0 & 25.2 & 53.6 \\
  128 (default) & 4 & \bf 5.25 & 40.6 & 43.6 & 60.3 &87.0 & 27.6 & \bf 54.6 \\
  256 & 4 & 5.33 & 40.1 & 42.6 & 59.6 & 87.0 & 26.0 & 53.8 \\
  \midrule
  32 & 2 & 6.62 & 32.5 & 35.5 & 50.5 & 86.6 & 24.0 & 49.1 \\
  64 & 2 & 6.56 & 33.5 & 35.5 & 54.0 & 84.5 & 25.6 & \bf 49.9 \\
  128 & 2 & 6.51 & \bf 33.7 & 34.7 & 52.0 & 86.1 & 24.1 & 49.2 \\
  256 & 2 & \bf 6.49 & 33.2 & 33.5 & 47.2 & 87.4 & 27.6 & 48.9 \\
  
  \bottomrule
  \end{tabular}
  }
  \centering
\end{table}

{\bf Sequence length.} Additionally, we present the results for different sequence lengths when finetuning the 2-bit Llama2-7B model in Table \ref{tab:seq_length} on arithmetic reasoning tasks. It shows that the fine-tuning accuracy nicely improves as the sequence length increases.

\begin{table}[!h]
  \setlength{\tabcolsep}{4.5pt}
  \small
  \centering
  
  \caption{Accuracy of finetuned 2-bit Llama2-7B for four arithmetic reasoning with different sequence lengths. \label{tab:seq_length}}
  {\scriptsize
  \begin{tabular}{c|ccccc}
  \toprule
  &  \multicolumn{5}{c}{\textbf{Arithmetic reasoning}} \\
  \textbf{Sequence length}& \textbf{GSM8K} & \textbf{SVAMP} & \textbf{MAWPS} & \textbf{AQuA} & \textbf{Avg.} $\uparrow$ \\
  \midrule
256 & 35.0 & 50.7 & 87.0 & 22.1 & 48.7 \\
512  & 34.7 & 52.0 & 86.1 & 24.1 & 49.2 \\
1024  & 34.1 & 52.8 & 87.8 & 24.0 & 49.7 \\
2048  & 34.5 & 52.9 & 87.8 & 24.6 & 50.0 \\
  \bottomrule
  \end{tabular}
  }
  \centering
\end{table}

{\bf Initialization Cost and Latency/Memory Benefits.} 
We compare the duration and GPU memory usage of Initialization between CLoQ and other baseline methods. As shown in Table \ref{tab: duration and mem}, CLoQ achieves efficient and scalable initialization, offering notable reductions in both runtime and memory consumption. For instance, CLoQ requires only 0.7 hours and 9GB of memory on LLaMA2-7B, outperforming ApiQ-lw (4.1h) and ApiQ-bw (1.3h) by large margins. On the larger LLaMA2-13B, CLoQ remains competitive, requiring just 1.5 hours and 13GB of memory, compared to 6.5 hours (ApiQ-lw) and 27GB (LoftQ). These results demonstrates CLoQ's strong practicality for large-scale model under limited hardware budgets.

\begin{table}[!h]
  \centering
  \scriptsize
  \caption{The duration and peak GPU memory used for Llama2.\label{tab: duration and mem}}
  \begin{tabular}{llrr}
  \toprule
  \textbf{Size} & \textbf{Method} & \textbf{Duration} & \textbf{Peak GPU memory} \\
  \midrule
  \multirow{4}{*}{Llama2-7B} & LoftQ & 0.6h & 14GB \\
  & ApiQ-lw & 4.1h & 6GB \\
  & ApiQ-bw & 1.3h & 12GB \\
  & CLoQ & 0.7h & 9GB \\
  \midrule
  \multirow{4}{*}{Llama2-13B} & LoftQ & 1.3h & 27GB \\
  & ApiQ-lw & 6.5h & 9GB \\
  & ApiQ-bw & 2.4h & 17GB \\
  & CLoQ & 1.5h & 13GB \\
  \bottomrule
  \end{tabular}
  \centering
\end{table}

\section{Related Work}
When quantizing pre-trained models, QLoRA \cite{dettmers2023qlora} primarily emphasizes the quantization process, often overlooking the critical importance of subsequent LoRA fine-tuning. It adopts the fixup initialization strategy used in LoRA, attaching zero-initialized low-rank adapters to the quantized pre-trained model. However, the discrepancies introduced by quantization, especially in extremely low-bit regimes, can significantly impact the initialization of LoRA fine-tuning, ultimately affecting the overall fine-tuning performance. LoftQ \cite{li2023loftq} jointly optimizes the quantized weights \(\Q\) and the low-rank adapter matrices \(\A\) and \(\B\) by solving the following optimization problem:
\begin{equation} \label{eq:loftq}
\min_{\Q, \A,\B} \| \Q + \A \B^\top - \W \|^2_\mathrm{F}.
\end{equation}
This approach ensures that \(\Q\), \(\A\), and \(\B\) are initialized to minimize the reconstruction error between the quantized and pre-trained weights. By aligning the quantized model’s initial state more closely with its pre-trained counterpart, LoftQ enhances fine-tuning performance without requiring calibration data.

LQ-LoRA \cite{guo2024lqlora} assigns importance weights to each parameter by evaluating its sensitivity to output variations, thereby guiding the decomposition process toward critical regions. Specifically, the (diagonal) Fisher matrix is used to weight the reconstruction objective during decomposition.
To solve the resulting weighted SVD problem, LQ-LoRA assumes row and column homogeneity in the Fisher matrix, allowing the use of standard SVD techniques at the cost of theoretical precision. However, this approach has limitations. It relies on approximations rather than solving the weighted SVD problem exactly, which may lead to suboptimal results. Furthermore, computing the Fisher matrix requires back-propagation through the pre-trained model, introducing additional computational overhead. Similar to our work, ApiQ \cite{liao2024apiq} also employs an activation-aware initialization strategy utilizing calibration data. However, it relies on two activation matrices—one obtained from the pre-trained model and the other from the quantized model, whereas CLoQ uses only a single pre-trained activation matrix. ApiQ optimizes the discrepancy using standard back-propagation, whereas CLoQ adopts a fully gradient-free approach. By avoiding the computational overhead of back-propagation, CLoQ enables faster adaptation while maintaining high performance across various tasks.

\section{Concluding Remarks}
In this work, we introduced CLoQ, an efficient and scalable method for fine-tuning quantized LLMs. By leveraging a small calibration dataset, CLoQ optimally initializes LoRA adapters through a novel layer-wise, data-driven approach, significantly improving the fine-tuning process without the need for back-propagation. The use of a closed-form solution for low-rank approximation, computed via two SVDs, ensures that CLoQ is both computationally efficient and highly effective, particularly at ultra-low bit-widths.
Our extensive experiments on multiple benchmark datasets demonstrate that CLoQ consistently outperforms existing LoRA-based methods for quantized models, such as QLoRA, in tasks requiring fine-grained precision. The results underscore the potential of CLoQ to enhance the performance of quantized models across a variety of downstream applications, including those that demand high accuracy, like arithmetic reasoning tasks. The simplicity and efficiency of CLoQ make it a promising approach for fine-tuning large-scale quantized LLMs in resource-constrained environments. Future work could further investigate the theoretical implications of different decompositions of the adapter matrices in CLoQ, and how these variations influence the performance of subsequent fine-tuning.

\section{Acknowledgments}
This work was partially supported by NSF grants DMS2208126, DMS-2110836, IIS-2110546, SUNY-IBM AI Research Alliance Grant, UAlbany-IBM CEAIS grant, and a start-up grant from SUNY Albany. We would also like to thank SUNY Albany for providing access to the Nvidia DGX Cloud.

\bibliography{main}
\bibliographystyle{tmlr}

\appendix
\section*{Appendix}
\section{Experimental Details} \label{Appendix}
\subsection{Language modeling}
To study the capability of CLoQ, we fine-tune quantized models on the WikiText-2 training set and measure perplexity on the validation set. The hyper-parameters used for fine-tuning are provided in Table \ref{tab: hyper-parameter space for decoder-only} and Table \ref{tab: best hyper-parameter for wiki and gsm8k}. We evaluate the models on the validation set at each epoch and report the lowest achieved perplexity.

\subsection{Arithmetic reasoning}

{\bf Single task (GSM8K)}

To assess CLoQ’s arithmetic reasoning capability, we fine-tune quantized models using the GSM8K training set and evaluate their accuracy on the test set. The hyperparameters used for fine-tuning are detailed in Table \ref{tab: hyper-parameter space for decoder-only} and Table \ref{tab: best hyper-parameter for wiki and gsm8k}. Model performance is evaluated at each epoch on the test set, and we report the highest recorded accuracy.

{\bf Multiple task}

Following the framework proposed by \cite{hu2023llm}, we adopt a more integrated approach by training a single model across multiple tasks.  Specifically, we fine-tune Llama2-7B and Llama2-13B on Math10K, a dataset that aggregates training samples from GSM8K, MAWPS, MAWPS-single, and AQuA. After finetuning, the models are tested on the evaluation sets of AQuA, GSM8K, MAWPS, and SVAMP. The hyper-parameters used for fine-tuning are detailed in Table \ref{tab: hyper-parameter space for decoder-only} and Table \ref{tab: best hyper-parameter for wiki and gsm8k}. Moreover, instead of conducting evaluations at every epoch, we assess model performance only after the final epoch.

\subsection{Commonsense reasoning}

To evaluate the commonsense reasoning capabilities of CLoQ, we consider eight key benchmark tasks: BoolQ, PIQA, SIQA, HellaSwag, WinoGrande, ARC-e, ARC-c, and OBQA. We adopt the framework proposed by \cite{hu2023llm} and fine-tune a single model across all these tasks instead of training separate models. We fine-tune Llama2-7B and Llama2-13B on the merged training set and measure accuracy on the corresponding test sets. The hyper-parameters used for fine-tuning are detailed in Table \ref{tab: hyper-parameter space for decoder-only} and Table \ref{tab: best hyper-parameter for wiki and gsm8k}. For evaluation, we forgo per-epoch assessments and instead report the final model's performance after the last epoch.

\begin{table}[ht]
  \centering
  \scriptsize
  \caption{Hyper-parameter for the finetuning of Llama2. \label{tab: hyper-parameter space for decoder-only}}
  \begin{tabular}{l|cc|cc}
  \toprule
  \textbf{Hyper-parameter} & \textbf{WikiText-2} & \textbf{GSM8K} & \textbf{Arithmetic reasoning}  & \textbf{Commonsense reasoning} \\
  \midrule
  Optimizer & \multicolumn{2}{c|}{AdamW} & \multicolumn{2}{c}{AdamW} \\
  Weight decay & \multicolumn{2}{c|}{0.1} & \multicolumn{2}{c}{1.0} \\
  LR scheduler & \multicolumn{2}{c|}{cosine} & \multicolumn{2}{c}{linear} \\
  Warmup ratio & \multicolumn{2}{c|}{3\%} & \multicolumn{2}{c}{10\%} \\
  Epochs & 3 & 6 & \multicolumn{2}{c}{3} \\
  Batch size & 64 & 32 & \multicolumn{2}{c}{16}  \\
  Max sequence length & 1024 & 512 & \multicolumn{2}{c}{512} \\
  \bottomrule
  \end{tabular}
  \centering
\end{table}
\begin{table}[ht]
  \centering
  \scriptsize
  \caption{Best learning rate for Llama2-7B and Llama2-13B on the WikiText-2, GSM8K, and multiple Arithmetic Reasoning tasks. \label{tab: best hyper-parameter for wiki and gsm8k}}
  \begin{tabular}{lc|ccc|ccc}
  \toprule
  & &\multicolumn{3}{c|}{\textbf{Llama2-7B}} & \multicolumn{3}{c}{\textbf{Llama2-13B}} \\
  \textbf{Group size} & \textbf{Task} & \textbf{4 Bits} & \textbf{3 Bits} & \textbf{2 Bits} & \textbf{4 Bits} & \textbf{3 Bits} & \textbf{2 Bits} \\
  \midrule
  \multirow{4}{*}{64} & WikiText-2 & 7e-4 & 7e-4 & 6e-4 & 2e-4 & 4e-4 & 4e-4  \\
  
   & GSM8K & 3e-4 & 3e-4 & 3e-4 & 4e-4 & 4e-4 & 3e-4  \\
  
   & Arithmetic reasoning & 5e-4 & 9e-4 & 4e-4 & 2e-4 & 4e-4 & 5e-4  \\
  
   & Commonsense reasoning & 8e-5 & 1e-4 & 4e-5 & 5e-5 & 7e-5 & 5e-5  \\
  \midrule
  \multirow{4}{*}{128} & WikiText-2 & - & 7e-4 & 5e-4 & - & 2e-4 & 5e-4 \\
  & GSM8K & - & 7e-4 & 5e-4 & - & 5e-4 & 4e-4 \\
  & Arithmetic reasoning & - & 7e-4 & 6e-4 & - & 3e-4 & 5e-4 \\
  \midrule
  \multirow{4}{*}{per-channel} & WikiText-2 & 7e-4 & 4e-4 & 4e-4 & 2e-4 & 2e-4 & 5e-4 \\
  & GSM8K & 4e-4 & 4e-4 & 4e-4 & 4e-4 & 5e-4 & 5e-4 \\
  & Arithmetic reasoning & 6e-4 & 8e-4 & 3e-4 & 2e-4 & 4e-4 & 5e-4 \\
  \bottomrule
  \end{tabular}
  \centering
\end{table}
\end{document}